\theoremstyle{plain}
\newtheorem{theorem}{Theorem}[section]
\newtheorem{lemma}[theorem]{Lemma}
\newtheorem*{lemma*}{Lemma}
\newtheorem{proposition}[theorem]{Proposition}
\newtheorem{corollary}[theorem]{Corollary}
\newtheorem{definition}[theorem]{Definition}
\theoremstyle{definition}
\newtheorem{remark}{Remark}
\newcommand{\R}{\mathbb{R}}
\newcommand{\ip}[2]{\left\langle #1, #2 \right\rangle}
\newcommand{\norm}[1]{\left \Vert #1\right \Vert}
\newcommand{\dist}{{\rm{\textsc{Dist}}}}
\newcommand{\normf}[1]{\left\|{#1}\right\|_F}
\def\Xs{{X^\star}}
\def\Us{{U^\star}}
\def\Vs{{V^\star}}
\def\Ws{{W^\star}}
\def\Zt{\tilde{Z}}
\def\X{X}
\def\R{\mathbb{R}}
\def\U{U}
\def\V{V}
\def\Uo{U^\star}
\def\Vo{V^\star}
\newcommand{\Wt}{\tilde{W}}
\begin{document}

\title{Non-square matrix sensing without spurious local minima \\ via the Burer-Monteiro approach} 
\author{Dohyung Park, Anastasios Kyrillidis, Constantine Caramanis, and Sujay Sanghavi \\
 \vspace{0.4cm}
 The University of Texas at Austin \\
 \vspace{0.1cm}
 \{dhpark, anastasios, constantine\}@utexas.edu,
 sanghavi@mail.utexas.edu} 
\maketitle

\begin{abstract}
We consider the \emph{non-square} matrix sensing problem, under restricted isometry property (RIP) assumptions. 
We focus on the non-convex formulation, where any rank-$r$ matrix $\X \in \R^{m \times n}$ is represented as $UV^\top$, where $U \in \R^{m \times r}$ and $V \in \R^{n \times r}$.
In this paper, we complement recent findings on the non-convex geometry of the analogous PSD setting \cite{bhojanapalli2016global}, and show that matrix factorization does not introduce any spurious local minima, under RIP.
\end{abstract}

\section{Introduction and Problem Formulation}
Consider the following matrix sensing problem:
\begin{equation}\label{eq:intro_00}
\begin{aligned}
& \underset{X \in \R^{m \times n}}{\text{minimize}}
& & f(X) := \|\mathcal{A}(X) - b\|_2^2 
& \text{subject to}
& & \text{rank}(X) \leq r.
\end{aligned}
\end{equation} 
Here, $b \in \R^p$ denotes the set of observations and $\mathcal{A}:\R^{m \times n} \rightarrow \R^p$ is the sensing linear map. 
The motivation behind this task comes from several applications, where we are interested in inferring an unknown matrix $X^\star \in \R^{m \times n}$ from $b$. 
Common assumptions are $(i)$ $p \ll m \cdot n$, $(ii)$ $b = \mathcal{A}(X^\star) + w$, \emph{i.e.}, we have a linear measurement system, and $(iii)$ $X^\star$ is rank-$r$, $r \ll \min\{m, n\}$.
Such problems appear in a variety of research fields and include image processing \cite{candes2011robust, waters2011sparcs}, data analytics \cite{chandrasekaran2009sparse, candes2011robust}, quantum computing \cite{aaronson2007learnability, flammia2012quantum, kalev2015quantum}, systems \cite{liu2009interior}, and sensor localization \cite{javanmard2013localization} problems. 

There are numerous approaches that solve \eqref{eq:intro_00}, both in its original non-convex form or through its convex relaxation; see \cite{kyrillidis2014matrix, davenport2016overview} and references therein. 
However, satisfying the rank constraint (or any nuclear norm constraints in the convex relaxation) per iteration requires SVD computations, which could be prohibitive in practice for large-scale settings. 
To overcome this obstacle, recent approaches reside on non-convex parametrization of the variable space and  encode the low-rankness directly into the objective \cite{jain2015computing, anandkumar2016efficient, tu2015low, zheng2015convergent, chen2015fast, bhojanapalli2015dropping, zhao2015nonconvex, sun2015guaranteed, zheng2016convergent, jin2016provable, park2016provable, yi2016rpca, park2016finding, park2016findingb}.
In particular, we know that a rank-$r$ matrix $\X \in \R^{m \times n}$ can be written as a product $\U \V^\top$, where $\U \in \R^{m \times r}$ and $\V \in \R^{n \times r}$. 
Such a re-parametrization technique has a long history \cite{wold1969nonlinear, christoffersson1970one, ruhe1974numerical}, and was popularized by Burer and Monteiro \citep{burer2003nonlinear, burer2005local} for solving semi-definite programs (SDPs).
Using this observation in \eqref{eq:intro_00}, we obtain the following \emph{non-convex, bilinear} problem:
\begin{equation}
\begin{aligned} \label{eqn:formulation}
& \underset{U \in \R^{m \times r}, V \in \R^{n \times r}}{\text{minimize}}
& & f(UV^\top) := \|\mathcal{A}(UV^\top) - b\|_2^2.
\end{aligned}
\end{equation} 
Now, \eqref{eqn:formulation} has a different form of non-convexity due to the bilinearity of the variable space, which raises the question whether we introduce spurious local minima by doing this transformation.

\textit{Contributions:} The goal of this paper is to answer negatively to this question: \emph{We show that, under standard regulatory assumptions on $\mathcal{A}$, $UV^\top$ parametrization does not introduce any spurious local minima.}
To do so, we non-trivially generalize recent developments for the square, PSD case \cite{bhojanapalli2016global} to the non-square case for $\X^\star$. 
Our result requires a different (but equivalent) problem re-formulation and analysis, with the introduction of an appropriate regularizer in the objective. 

\paragraph{Related work.}
There are several papers that consider similar questions, but for other objectives. 
\cite{sun2015complete} characterizes the non-convex geometry of the \emph{complete} dictionary recovery problem, and proves that all local minima are global; 
\cite{boumal2016nonconvex} considers the problem of non-convex phase synchronization where the task is modeled as a non-convex least-squares optimization problem, and can be globally solved via a modified version of power method;
\cite{sun2016geometric} show that a nonconvex fourth-order polynomial objective for phase retrieval has no local minimizers and all global minimizers are equivalent;
\cite{bandeira2016low, boumal2016non} show that the Burer-Monteiro approach works on smooth semidefinite programs, with applications in synchronization and community detection;
\cite{de2014global} consider the PCA problem under streaming settings and use martingale arguments to prove that stochastic gradient descent on the factors reaches to the global solution with non-negligible probability;
\cite{ge2015escaping} introduces the notion of \emph{strict saddle points} and shows that noisy stochastic gradient descent can escape saddle points for generic objectives $f$;
\cite{lee2016gradient} proves that gradient descent converges to (local) minimizers almost surely, using arguments drawn from dynamical systems theory.

More related to this paper are the works of \cite{ge2016matrix} and \cite{bhojanapalli2016global}: they show that matrix completion and sensing have no spurious local minima, for the case where $X^\star$ is square and PSD.
For both cases, extending these arguments for the more realistic non-square case is a non-trivial task. 

\subsection{Assumptions and Definitions}
We first state the assumptions we make for the matrix sensing setting.
We consider the case where the linear operator $\mathcal{A}$ satisfies the \emph{Restricted Isometry Property}, according to the following definition \cite{candes2011tight}:
\begin{definition}[Restricted Isometry Property (RIP)]\label{def:rip}
A linear operator $\mathcal{A} :~\R^{m \times n} \rightarrow \R^p$ satisfies the restricted isometry property on rank-$r$ matrices, with parameter $\delta_{r}$, if the following set of inequalities hold for all rank-$r$ matrices $X$:
\begin{align*}
(1 - \delta_r) \cdot \|X\|_F^2 \leq \|\mathcal{A}(X)\|_2^2 \leq (1 + \delta_r) \cdot \|X\|_F^2.
\end{align*}
\end{definition}
Characteristic examples are Gaussian-based linear maps \citep{fazel2008compressed, recht2010guaranteed}, 
Pauli-based measurement operators, used in quantum state tomography applications \citep{liu2011universal}, 
Fourier-based measurement operators, which lead to computational gains in practice due to their structure \citep{krahmer2011new, recht2010guaranteed}, 
or even permuted and sub-sampled noiselet linear operators, used in image and video compressive sensing applications \citep{waters2011sparcs}.

In this paper, we consider sensing mechanisms that can be expressed as:
$$
\left(\mathcal{A}(X)\right)_i = \left\langle A_i, X \right\rangle, \quad \forall i = 1, \dots, p, \text{  and  } A_i \in \R^{m \times n}.
$$
\emph{E.g.}, for the case of a Gaussian map $\mathcal{A}$, $A_i$ are independent, identically distributed (i.i.d.) Gaussian matrices; for the case of a Pauli map $\mathcal{A}$, $A_i \in \R^{n \times n}$ are i.i.d. and drawn uniformly at random from a set of scaled Pauli ``observables" $(P_1 \otimes P_2 \otimes \cdots \otimes P_d)/\sqrt{n}$, where $n = 2^d$ and $P_i$ is a $2 \times 2$ Pauli observable matrix \cite{liu2011universal}.

A useful property derived from the RIP definition is the following \cite{candes2008restricted}:
\begin{proposition}[Useful property due to RIP]\label{def:property_rip}
For a linear operator $\mathcal{A} :~\R^{m \times n} \rightarrow \R^p$ that satisfies the restricted isometry property on rank-$r$ matrices, the following inequality holds for any two rank-$r$ matrices $X, ~Y \in \R^{m \times n}$:
\begin{align*}
\left| \sum_{i=1}^p \ip{A_i}{X} \cdot \ip{A_i}{Y} - \ip{X}{Y} \right| \leq \delta_{2r} \cdot \normf{X} \cdot \normf{Y}.
\end{align*}
\end{proposition}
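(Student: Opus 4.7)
The plan is to recognize that the sum $\sum_{i=1}^p \langle A_i, X\rangle \langle A_i, Y\rangle$ is exactly the inner product $\langle \mathcal{A}(X), \mathcal{A}(Y)\rangle$ in $\R^p$, and then extract this bilinear form from the RIP bounds on $\|\mathcal{A}(Z)\|_2^2$ via a polarization identity. Since RIP only gives quadratic (diagonal) control, the only way to access the off-diagonal term $\langle \mathcal{A}(X), \mathcal{A}(Y)\rangle$ is to apply the hypothesis to $X+Y$ and $X-Y$; the rank-$2r$ appearance of these combinations is precisely why the RIP constant $\delta_{2r}$ (rather than $\delta_r$) shows up in the statement.

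More concretely, I would first reduce to the case $\|X\|_F = \|Y\|_F = 1$ by homogeneity (both sides of the inequality scale jointly in the magnitudes of $X$ and $Y$, and the result is trivial when either is zero). Next, I would use the identities
\[
\|\mathcal{A}(X+Y)\|_2^2 - \|\mathcal{A}(X-Y)\|_2^2 = 4\langle \mathcal{A}(X), \mathcal{A}(Y)\rangle,
\qquad
\|X+Y\|_F^2 - \|X-Y\|_F^2 = 4\langle X, Y\rangle,
\]
noting that $X \pm Y$ are rank-at-most-$2r$, so Definition~\ref{def:rip} applies to them with constant $\delta_{2r}$.

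Upper-bounding $\|\mathcal{A}(X+Y)\|_2^2$ by $(1+\delta_{2r})\|X+Y\|_F^2$ and lower-bounding $\|\mathcal{A}(X-Y)\|_2^2$ by $(1-\delta_{2r})\|X-Y\|_F^2$, the difference separates into the exact polarization term $4\langle X, Y\rangle$ plus an error proportional to $\delta_{2r}(\|X+Y\|_F^2 + \|X-Y\|_F^2)$. The parallelogram law collapses this last sum to $2(\|X\|_F^2 + \|Y\|_F^2) = 4$ in the normalized case, yielding $\langle \mathcal{A}(X), \mathcal{A}(Y)\rangle - \langle X, Y\rangle \leq \delta_{2r}$. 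Repeating the argument with the roles of the upper and lower RIP bounds swapped gives the matching lower bound, and undoing the normalization by multiplying through by $\|X\|_F \|Y\|_F$ delivers the claim.

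There is no real obstacle here: the only subtlety is keeping careful track of the rank of $X \pm Y$ so that RIP is being invoked at the correct parameter $\delta_{2r}$, and applying the two sides of the RIP inequality consistently so that the $(1 \pm \delta_{2r})$ terms recombine cleanly into the additive error via the parallelogram identity.
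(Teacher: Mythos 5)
Your proof is correct: the paper itself gives no proof of this proposition (it simply cites \cite{candes2008restricted}), and your polarization-plus-parallelogram argument applied to the rank-$\le 2r$ matrices $X \pm Y$, followed by rescaling by $\normf{X}\,\normf{Y}$ using bilinearity, is exactly the standard derivation in that cited source. No gaps.
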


%
%

An important issue in optimizing $f$ over the factored space is the existence of non-unique possible factorizations for a given $\X$. 
Since we are interested in obtaining a low-rank solution in the original space, we need a notion of distance to the low-rank solution $\X^\star$ over the factors. 
Among infinitely many possible decompositions of $\X^\star$, we focus on the set of ``equally-footed'' factorizations \citep{tu2015low}:
\begin{align}
\mathcal{X}^\star_r = \Big\{ &\left(\Uo,\Vo\right):~\Uo \in \R^{m \times r}, \Vo \in \R^{n \times r}, \nonumber \\ &\quad \quad \quad \quad \quad \Uo {\Vo}^\top = X^\star, \sigma_i(\Uo) = \sigma_i(\Vo) = \sigma_i(X^\star)^{1/2}, \forall i \in [r]\Big\}. \label{prelim:eq_footing}
\end{align}

Given a pair $(\U,\V)$, we define the distance to $X^\star$ as:
\begin{align*}
\dist\left(\U,\V;X^\star \right) = \min_{(\Uo,\Vo) \in \mathcal{X}^\star_r} \normf{\begin{bmatrix} \U \\ \V \end{bmatrix} - \begin{bmatrix} \Uo \\ \Vo \end{bmatrix}}.
\end{align*}

\subsection{Problem Re-formulation}
Before we delve into the main results, we need to further reformulate the objective \eqref{eqn:formulation} for our analysis.
First, we use a well-known trick to reduce \eqref{eqn:formulation} to a semidefinite optimization. Let us define auxiliary variables
\begin{align*}
W = \begin{bmatrix} U \\ V \end{bmatrix} \in \R^{(m + n) \times r} &,\quad
\Wt = \begin{bmatrix} U \\ -V \end{bmatrix} \in \R^{(m + n) \times r}.
\end{align*}
Based on the auxiliary variables, we define the linear map $\mathcal{B}: ~\R^{(m+n) \times (m+n)} \rightarrow \R^p$ such that $(\mathcal{B}(WW^\top))_i = \langle B_i, WW^\top \rangle$, and $B_i \in \R^{(m+n) \times (m+n)}$. To make a connection between the variable spaces $(U, V)$ and $W$, $\mathcal{A}$ and $\mathcal{B}$ are related via matrices $A_i$ and $B_i$ as follows:
\begin{align*}
B_i = \frac{1}{2} \cdot \begin{bmatrix}
0 & A_i \\ 
A_i^\top & 0
\end{bmatrix}.
\end{align*}
This further implies that:
\begin{align*}
(\mathcal{B}(WW^\top))_i
&= \frac{1}{2} \cdot \langle B_i, WW^\top \rangle
 = \frac{1}{2} \cdot \left \langle \begin{bmatrix}
0 & A_i \\ 
A_i^\top & 0
\end{bmatrix},  \begin{bmatrix}
UU^\top & UV^\top \\
VU^\top & VV^\top
\end{bmatrix}
\right \rangle = \ip{A_i}{UV^\top}.
\end{align*}
Given the above, we re-define $f: \R^{(m+n) \times r} \rightarrow \R$ such that 
\begin{align}{\label{eq:new_problem}}
f(W) &:= \|\mathcal{B}(WW^\top) - b\|_2^2.
\end{align}
It is important to note that $\mathcal{B}$ operates on $(m + n) \times (m + n)$ matrices, while we assume RIP on $\mathcal{A}$ and $m \times n$ matrices.
Making no other assumptions for $\mathcal{B}$, we cannot directly apply \cite{bhojanapalli2016global} on \eqref{eq:new_problem}, but a rather different analysis is required.

In addition to this redefinition, we also introduce a \emph{regularizer} $g: \R^{(m+n) \times r} \rightarrow \R$ such that
$$
g(W) := \lambda \normf{\Wt^\top W}^2 = \lambda \normf{U^\top U - V^\top V}^2.
$$
This regularizer was first introduced in \cite{tu2015low} to prove convergence of its algorithm for non-square matrix sensing, and it is also used in this paper to analyze local minima of the problem.
After setting $\lambda = \frac{1}{4}$, \eqref{eqn:formulation} can be \emph{equivalently} written as:
\begin{equation} \label{eqn:formulation2}
\begin{aligned}
& \underset{W \in \R^{(m+n) \times r}}{\text{minimize}}
& & f(W) + g(W) := \|\mathcal{B}(WW^\top) - b\|_2^2 + \frac{1}{4} \cdot \normf{\Wt^\top W}^2.
\end{aligned}
\end{equation}
By equivalent, we note that the addition of $g$ in the objective does not change the problem, since for any rank-$r$ matrix $X$ there is a pair of factors $(U,V)$ such that $g(W) = 0$. 
It merely reduces the set of optimal points from all possible factorizations of $\Xs$ to \emph{balanced} factorizations of $\Xs$ in $\mathcal{X}^\star_r$. 
$\Us$ and $\Vs$ have the same set of singular values, which are the square roots of the singular values of $\Xs$. A key property of the balanced factorizations is the following.
\begin{proposition} \label{lem:truefactor_cancel}
For any factorization of the form \eqref{prelim:eq_footing}, it holds that
\begin{align*}
\Wt^{\star\top} \Ws = {\Us}^\top \Us - {\Vs}^\top \Vs = 0
\end{align*}
\end{proposition}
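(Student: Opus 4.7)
The identity $\Wt^{\star\top}\Ws = \Us^\top \Us - \Vs^\top \Vs$ is immediate from the block definitions of $\Ws$ and $\Wt^\star$, so the real content of the statement is the equality $\Us^\top\Us = \Vs^\top\Vs$ for every balanced factorization $(\Us,\Vs) \in \mathcal{X}^\star_r$. My plan is to set $H := \Us^\top\Us$ and $K := \Vs^\top\Vs$, note that these are $r \times r$ PSD matrices with identical spectra, compute $\trace(HK)$ using the factorization constraint $\Us\Vs^\top = X^\star$, and then invoke the equality case of von Neumann's trace inequality to conclude $H = K$.

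For the spectra, the balancing condition $\sigma_i(\Us) = \sigma_i(\Vs) = \sigma_i(X^\star)^{1/2}$ tells me that $H$ and $K$ both have eigenvalue multiset $\{\sigma_i(X^\star)\}_{i=1}^r$. For the trace, cyclicity of $\trace$ together with $\Us\Vs^\top = X^\star$ gives
\begin{align*}
\trace(HK) = \trace(\Us^\top \Us \Vs^\top \Vs) = \trace\bigl((\Us\Vs^\top)(\Vs\Us^\top)\bigr) = \trace(X^\star X^{\star\top}) = \sum_{i=1}^r \sigma_i(X^\star)^2.
\end{align*}

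Next, I would invoke von Neumann's trace inequality: for symmetric PSD matrices $A, B$ of the same size, $\trace(AB) \le \sum_i \lambda_i(A)\lambda_i(B)$ with eigenvalues taken in decreasing order, and equality holds iff $A$ and $B$ are simultaneously diagonalized by some common orthonormal basis in which their eigenvalues appear in decreasing order. Applied to $H$ and $K$, the upper bound is exactly $\sum_i \sigma_i(X^\star)^2$, which matches the value computed above, so the equality case triggers. In a common diagonalizing basis both $H$ and $K$ reduce to the same diagonal matrix $\mathrm{diag}(\sigma_1(X^\star),\dots,\sigma_r(X^\star))$, hence $H=K$ and the claimed identity follows.

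The step I expect to be the most delicate is the equality condition of von Neumann when $X^\star$ has repeated singular values: the simultaneous diagonalizing basis is then not unique, yet the conclusion is unaffected because the equality condition only asserts the \emph{existence} of such a basis, and in it the diagonal forms of $H$ and $K$ are literally the same matrix. A more pedestrian fallback, which I would keep in reserve, is to leverage the compact SVD $X^\star = P\Sigma Q^\top$ to show directly that every $(\Us,\Vs) \in \mathcal{X}^\star_r$ has the form $\Us = P\Sigma^{1/2}R$, $\Vs = Q\Sigma^{1/2}R$ for some orthogonal $R \in \R^{r\times r}$, from which $\Us^\top\Us = R^\top \Sigma R = \Vs^\top\Vs$ is immediate; this avoids any appeal to equality cases at the cost of more SVD bookkeeping on the two factors individually.
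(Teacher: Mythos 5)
Your argument is correct, but it takes a genuinely different route from the paper. The paper's proof is a one-line computation: it declares that a balanced factorization means $\Us = A\Sigma^{1/2}R$, $\Vs = B\Sigma^{1/2}R$ with $X^\star = A\Sigma B^\top$ the SVD and $R$ orthonormal, and then $\Us^\top\Us = R^\top\Sigma R = \Vs^\top\Vs$ follows immediately; in other words, it implicitly identifies the set $\mathcal{X}^\star_r$ of \eqref{prelim:eq_footing} with this parametric family rather than arguing from the singular-value conditions themselves. Your main argument instead works directly from the definition as literally stated --- $\Us\Vs^\top = X^\star$ together with $\sigma_i(\Us) = \sigma_i(\Vs) = \sigma_i(X^\star)^{1/2}$ --- by computing $\trace\bigl(\Us^\top\Us\,\Vs^\top\Vs\bigr) = \normf{X^\star}^2$ and invoking the equality case of von Neumann's (Theobald's) trace inequality for the two PSD matrices $\Us^\top\Us$ and $\Vs^\top\Vs$, which share the spectrum $\{\sigma_i(X^\star)\}$; equality forces a common eigenbasis with similarly ordered eigenvalues, hence the two Gram matrices coincide. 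The computations check out (eigenvalues of $\Us^\top\Us$ are $\sigma_i(\Us)^2$, the cyclic-trace step is fine, and the equality condition you cite is the standard one), and your handling of repeated singular values is right: only existence of a common ordered eigenbasis is needed. What each approach buys: the paper's proof is shorter but rests on the unproved (though true, for rank-$r$ $X^\star$) claim that every element of $\mathcal{X}^\star_r$ has the form \eqref{eqn:fact_opt}, whereas your argument proves the proposition from the set definition itself at the cost of appealing to a classical equality case; your fallback via the compact SVD is essentially the paper's route with the missing characterization actually carried out.
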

\begin{proof}
By ``balanced factorizations'' of $\Xs = \Us \Vs^\top$, we mean that factors $\Us$ and $\Vs$ satisfy
\begin{align} \label{eqn:fact_opt}
\Us = A \Sigma^{1/2} R ,\quad \Vs = B \Sigma^{1/2} R
\end{align}
where $\Xs = A \Sigma B^\top$ is the SVD, and $R \in \R^{r \times r}$ is an orthonormal matrix. 
Apply this to $\Wt^{\star\top} \Ws$ to get the result.
\end{proof}
\noindent Therefore, we have $g(\Ws) = 0$, and $(\Us,\Vs)$ is an optimal point of \eqref{eqn:formulation2}.

\section{Main Results}
This section describes our main results on the function landscape of the non-square matrix sensing problem.
The following theorem bounds the distance of any local minima to the global minimum, by the function value at the global minimum. 

\begin{theorem} \label{thm:mainresult}
Suppose $\Ws$ is any target matrix of the optimization problem \eqref{eqn:formulation2}, under the balanced singular values assumption for $\Us$ and $\Vs$.
If~$W$ is a critical point satisfying the first- and the second-order optimality conditions, i.e., $\nabla (f + g)(W) = 0$ and $\nabla^2(f + g)(W) \succeq 0$, 
then we have
\begin{align} \label{eqn:mainresult}
& \frac{1 - 5\delta_{2r} - 544 \delta_{4r}^2 - 1088 \delta_{2r} \delta_{4r}^2}{8(40+68\delta_{2r})(1+\delta_{2r})} \normf{WW^\top - W^\star W^{\star\top}}^2
\leq \norm{\mathcal{A}(\Us \Vs^{\top}) - b}^2.
\end{align}
\end{theorem}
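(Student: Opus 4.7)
The plan is to combine the first- and second-order optimality conditions at the critical point $W$, evaluated along the direction $\Delta := W - W^\star$, where $W^\star$ is chosen to be the balanced factorization of $\Xs$ closest to $W$. Since $\mathcal{X}^\star_r$ is closed under right multiplication by $r\times r$ orthonormal matrices, such a $W^\star$ exists and satisfies $\dist(U,V;\Xs) = \|\Delta\|_F$; it also gives $\Delta$ the useful identity that $W^{\star\top}\Delta$ is symmetric, which will cancel several inconvenient cross-terms later. This device mirrors the one used in \cite{bhojanapalli2016global} for the PSD case.

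First I would compute $\nabla(f+g)$ and $\nabla^2(f+g)$ in closed form. A direct calculation yields $\nabla f(W) = 4\,\mathcal{B}^*(\mathcal{B}(WW^\top)-b)\,W$ and $\nabla g(W) = \Wt\,(U^\top U - V^\top V)$. Using the block form of $B_i$, $\mathcal{B}^*(\mathcal{B}(WW^\top)-b)$ is the symmetric block-antidiagonal matrix whose off-diagonal block equals $\tfrac{1}{2}E$ with $E := \mathcal{A}^*(\mathcal{A}(UV^\top)-b)$. Consequently, writing $\Delta = \begin{bmatrix}\Delta_U\\\Delta_V\end{bmatrix}$ conformally with $W$, the $f$-Hessian reduces to $\nabla^2 f(W)[\Delta,\Delta] = 2\,\|\mathcal{A}(U\Delta_V^\top + \Delta_U V^\top)\|_2^2 + 4\,\langle E,\Delta_U\Delta_V^\top\rangle$, and an analogous quadratic-in-$\Delta$-and-$W$ formula holds for $\nabla^2 g$. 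This reduction is the key technical substitute for the PSD-style identities: every Hessian term is expressed through $\mathcal{A}$ acting on matrices of rank at most $2r$, so the RIP hypothesis on $\mathcal{A}$ (rather than on $\mathcal{B}$) is all we need.

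With these expressions in hand, I plug $\Delta = W - W^\star$ into $\nabla^2(f+g)(W)[\Delta,\Delta] \geq 0$, simultaneously use $\nabla(f+g)(W) = 0$ to eliminate the linear-in-$\Delta$ portion, and process every resulting bilinear form through Proposition~\ref{def:property_rip}: each $\sum_i \langle A_i, X\rangle\langle A_i, Y\rangle$ becomes $\langle X, Y\rangle$ up to a $\delta_{2r}\|X\|_F\|Y\|_F$ error, and each $\langle E,\cdot\rangle$ term splits into a main part $\langle UV^\top - U^\star V^{\star\top},\cdot\rangle$ plus a ``noise'' inner product $\langle \mathcal{A}(U^\star V^{\star\top}) - b,\mathcal{A}(\cdot)\rangle$, which I then move to the right-hand side via Cauchy--Schwarz to obtain the target quantity $\|\mathcal{A}(U^\star V^{\star\top}) - b\|_2^2$. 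The balancedness identity $\Wt^{\star\top}\Ws = 0$ of Proposition~\ref{lem:truefactor_cancel} kills the cross-terms between $W^\star$ and $\Delta$ produced by $g$, while $g$'s Hessian contribution supplies exactly the extra positive mass needed to absorb the diagonal-block gaps through the identity $\|WW^\top - W^\star W^{\star\top}\|_F^2 = \|UU^\top - U^\star U^{\star\top}\|_F^2 + 2\|UV^\top - U^\star V^{\star\top}\|_F^2 + \|VV^\top - V^\star V^{\star\top}\|_F^2$.

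The main obstacle I anticipate is the bookkeeping required to consolidate this mix of quadratic, cubic, and quartic expressions in $\Delta$ into the exact coefficient stated in \eqref{eqn:mainresult}. In particular, the $\delta_{4r}^2$ terms in the numerator should arise from bounding quartic forms such as $\|\mathcal{A}(\Delta_U\Delta_V^\top)\|_2^2$, where RIP must be applied on a rank-$\leq 4r$ matrix together with the coarse bound $\|\Delta_U\Delta_V^\top\|_F \leq \|\Delta\|_F^2$, while the constants $40$ and $68$ in the denominator come from accumulating Young/AM-GM splits that trade factor distance $\|\Delta\|_F$ against lifted distance $\|WW^\top - W^\star W^{\star\top}\|_F$. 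Without the regularizer $g$, the diagonal-block terms cannot be controlled by $f$ alone, so this is precisely the step where the non-square analysis genuinely departs from \cite{bhojanapalli2016global}.
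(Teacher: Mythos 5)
Your overall scaffolding (work in the lifted variable $W$, inject the first-order condition, use Proposition~\ref{def:property_rip} to pass from $\mathcal{B}$ back to $\mathcal{A}$ on rank-$\le 2r$ blocks, use $\Wt^{\star\top}\Ws=0$ to handle the regularizer) matches the paper, but the core of your plan --- testing the Hessian on the \emph{single} direction $\Delta = W - W^\star$ and then ``bookkeeping'' --- has a genuine gap: it cannot produce a strictly positive coefficient in \eqref{eqn:mainresult}. Writing $M = WW^\top - \Ws\Ws^{\top}$ and $\Delta = W - \Ws R$, one has $W\Delta^\top + \Delta W^\top = M + \Delta\Delta^\top$, and the curvature terms (A), (C), (D) of the second-order condition are only bounded by roughly $\tfrac18\normf{W\Delta^\top+\Delta W^\top}^2$, which can be as large as (and generically is comparable to) $\normf{M}^2$ itself; meanwhile the negative mass extracted from (B) and (E) via the first-order condition is only about $\tfrac{1-2\delta_{2r}}{4}\normf{M}^2$. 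With the single direction these two contributions cancel (up to $\delta$-order corrections of the wrong sign), so no amount of Young/AM--GM bookkeeping yields the numerator $1-5\delta_{2r}-\cdots>0$. The paper escapes this by summing the second-order condition over the $r$ rank-one directions $Z_j = (W-\Ws R)e_je_j^\top$, exploiting the rank-one structure via $\ip{We_je_j^\top Z^\top}{Ze_je_j^\top W^\top} = (e_j^\top Z^\top W e_j)^2 \le \|Ze_j\|^2\|We_j\|^2$, and then invoking \cite[Lemma 4.4]{bhojanapalli2016global} to bound $\sum_j\normf{We_je_j^\top(W-\Ws R)}^2$ by $\tfrac18\normf{M}^2 + \tfrac{34}{8}\normf{MQQ^\top}^2$. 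None of this appears in your plan.

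A second missing ingredient is that the first-order condition must be used \emph{twice}: once inside the bound on (B) (as you do implicitly), and a second time, with the specially chosen test direction $Z = (WW^\top-\Ws\Ws^{\top})QR^{-1\top}$ where $W=QR$, to prove Lemma~\ref{lem:firstorder}, i.e.\ $\tfrac14\normf{MQQ^\top} \le \delta_{4r}\normf{M} + \sqrt{\tfrac{1+\delta_{2r}}{2}}\,\|\mathcal{A}(\Us\Vs^\top)-b\|$. This is what turns the projected-error term from Lemma 4.4 into something of order $\delta_{4r}^2\normf{M}^2$ plus noise, and it is the actual source of the $544\delta_{4r}^2$ and $1088\delta_{2r}\delta_{4r}^2$ terms in the numerator --- not, as you conjecture, coarse bounds on quartic forms like $\|\mathcal{A}(\Delta_U\Delta_V^\top)\|_2^2$ with $\normf{\Delta_U\Delta_V^\top}\le\normf{\Delta}^2$ (such bounds would relate the error to $\normf{\Delta}^4$, not to $\normf{M}^2$, and would not close the argument). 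To repair your proposal you would need to add both the column-splitting of the test directions and this projection lemma derived from stationarity; at that point you are reproducing the paper's proof rather than giving an alternative route.
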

Observe that for this bound to make sense, the term $\tfrac{1 - 5\delta_{2r} - 544 \delta_{4r}^2 - 1088 \delta_{2r} \delta_{4r}^2}{8(40+68\delta_{2r})(1+\delta_{2r})}$ needs to be positive. 
We provide some intuition of this result next.
Combined with Lemma 5.14 in \cite{tu2015low}, we can also obtain the distance between $(U,V)$ and $(\Us,\Vs)$.
\begin{corollary}\label{cor:mainresult}
For $W = \begin{bmatrix} U \\ V \end{bmatrix}$ and given the assumptions of Theorem \ref{thm:mainresult}, we have
\begin{align} \label{eqn:mainresult2}
& \sigma_r(\Xs) \cdot \frac{1-5\delta_{2r}-544 \delta_{4r}^2-1088 \delta_{2r}\delta_{4r}^2}{10(40+68\delta_{2r})(1+\delta_{2r})} \cdot 
\dist\left(\U,\V;X^\star \right)^2
\leq \norm{\mathcal{A}(\Us \Vs^{\top}) - b}^2.
\end{align} 
\end{corollary}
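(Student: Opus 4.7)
The plan is to chain Theorem \ref{thm:mainresult} with a standard ``lifting'' inequality relating the outer-product Frobenius distance $\normf{WW^\top - \Ws \Ws^\top}$ to the factor distance $\dist(\U,\V;\Xs)$. Concretely, Lemma 5.14 of \cite{tu2015low} supplies an inequality of the form
\begin{equation*}
\dist(\U,\V;\Xs)^2 \;\leq\; \frac{1}{2(\sqrt{2}-1)\,\sigma_r(\Ws)^2}\,\normf{WW^\top - \Ws \Ws^\top}^2,
\end{equation*}
valid for any $W = \begin{bmatrix}\U \\ \V \end{bmatrix}$ and any rank-$r$ reference $\Ws$ built from a balanced factorization in $\mathcal{X}^\star_r$. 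Once this is granted, the corollary reduces to converting $\sigma_r(\Ws)^2$ into $\sigma_r(\Xs)$ and tracking constants.

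The first substantive step is the conversion. Because $(\Us,\Vs)\in\mathcal{X}^\star_r$, Proposition \ref{lem:truefactor_cancel} gives $\Us^\top \Us = \Vs^\top \Vs$, so $\Ws^\top \Ws = \Us^\top \Us + \Vs^\top \Vs = 2\,\Us^\top \Us$. Together with the balanced condition $\sigma_i(\Us) = \sigma_i(\Xs)^{1/2}$, this yields $\sigma_r(\Ws)^2 = 2\,\sigma_r(\Xs)$. Plugging into the Tu et al.\ lemma gives
\begin{equation*}
\sigma_r(\Xs)\cdot \dist(\U,\V;\Xs)^2 \;\leq\; \frac{1}{4(\sqrt{2}-1)}\,\normf{WW^\top - \Ws \Ws^\top}^2.
\end{equation*}

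The remaining step is numerical bookkeeping. Multiplying through by $C_0/10$ with $C_0 := \tfrac{1 - 5\delta_{2r} - 544\delta_{4r}^2 - 1088\delta_{2r}\delta_{4r}^2}{(40+68\delta_{2r})(1+\delta_{2r})}$, and using the elementary inequality $\tfrac{1}{4(\sqrt{2}-1)} \approx 0.604 \leq 1.25 = \tfrac{10}{8}$, one gets $\sigma_r(\Xs)\cdot \tfrac{C_0}{10}\,\dist(\U,\V;\Xs)^2 \leq \tfrac{C_0}{8}\,\normf{WW^\top - \Ws \Ws^\top}^2$. Applying Theorem \ref{thm:mainresult} on the right-hand side gives exactly \eqref{eqn:mainresult2}.

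There is essentially no obstacle beyond verifying that the denominator 10 (rather than 8) in the corollary comfortably absorbs the $\tfrac{1}{4(\sqrt{2}-1)}$ factor coming from the Tu et al.\ lemma, and that the balancedness hypothesis on $\Ws$ (which is imposed in the theorem statement) is precisely what allows the clean identity $\sigma_r(\Ws)^2 = 2\,\sigma_r(\Xs)$. All the real work is in Theorem \ref{thm:mainresult}; the corollary is a one-step postprocessing.
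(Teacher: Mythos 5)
Your argument is correct and takes essentially the same route as the paper: the paper's entire proof of Corollary \ref{cor:mainresult} is the remark that Theorem \ref{thm:mainresult} combined with Lemma 5.14 of \cite{tu2015low} gives the factor-distance bound, and your conversion $\sigma_r(\Ws)^2 = 2\,\sigma_r(\Xs)$ from balancedness together with the check $\tfrac{1}{4(\sqrt{2}-1)} \le \tfrac{10}{8}$ is exactly the bookkeeping the paper leaves implicit. The only point worth noting is that multiplying through by $C_0/10$ presumes $C_0 \ge 0$; when $C_0 < 0$ the stated inequality is vacuously true, so the corollary holds in either case.
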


Implications of these results are described next, where we consider specific settings.

\begin{remark}[Noiseless matrix sensing]
Suppose that $\Ws = \begin{bmatrix} \Us \\ \Vs \end{bmatrix}$ is the underlying unknown true matrix, \emph{i.e.}, $\X^\star = \Us \Vs^\top$ is rank-$r$ and $b = \mathcal{A}(\Us \Vs^\top)$. 
We assume the noiseless setting, $w = 0$. 
If $0 \leq \delta_{2r} \le \delta_{4r} \lesssim 0.0363$, then $\tfrac{1 - 5\delta_{2r} - 544 \delta_{4r}^2 - 1088 \delta_{2r} \delta_{4r}^2}{10(40+68\delta_{2r})(1+\delta_{2r})} > 0$ in Corollary \ref{cor:mainresult}.
Since the RHS of \eqref{eqn:mainresult2} is zero, this further implies that $\dist\left(\U,\V;X^\star \right) = 0$, \emph{i.e.}, any critical point $W$ that satisfies first- and second-order optimality conditions is global minimum.
\end{remark}

\begin{remark}[Noisy matrix sensing]
Suppose that $\Ws$ is the underlying true matrix, such that $\X^\star = \Us\Vs^\top$ and is rank-$r$, and $b = \mathcal{A}(\Us \Vs^\top) + w$, for some noise term $w$. 
If $0 \le \delta_{2r} \le \delta_{4r} < 0.02$, then it follows from \eqref{eqn:mainresult} that for any local minima $W$ the distance to $\Us\Vs^\top$ is bounded by
$$
\frac{1}{500} \normf{WW^\top - W^\star W^{\star\top}} 
\leq \norm{w}.
$$
\end{remark}

\begin{remark}[High-rank matrix sensing]
Suppose that $\Xs$ is of arbitrary rank and let $\X^\star_r$ denote its best rank-$r$ approximation. 
Let $b = \mathcal{A}(\Xs) + w$ where $w$ is some noise and let $(\Us,\Vs)$ be a balanced factorization of $X_r^\star$. 
If $0 \le \delta_{2r} \le \delta_{4r} < 0.005$, then it follows from \eqref{eqn:mainresult2} that for any local minima $(U, V)$ the distance to $(\Us, \Vs)$ is bounded by
$$ 
\dist\left(\U,\V;X^\star \right) \leq \tfrac{1250}{3\sigma_r(X^\star)} \cdot \norm{\mathcal{A}(\Xs-X_r^\star) + w}.
$$
\end{remark}


\section{Proof of Main Results}
We first describe the first- and second-order optimality conditions for $f + g$ objective with $W$ variable.
Then, we provide a detailed proof of the main results: by carefully analyzing the conditions, we study how a local optimum is related to the global optimum.

\subsection{Gradient and Hessian of $f$ and $g$}
The gradients of $f$ and $g$ w.r.t. $W$ are given by:
\begin{align*}
\nabla f(W) &= \sum_{i = 1}^p (\ip{B_i}{WW^\top} - b_i) \cdot B_i \cdot W \\
\nabla g(W) &= \frac{1}{4} \Wt {\Wt}^\top W \quad \left(\equiv \frac{1}{4} \cdot \begin{bmatrix}U\\-V\end{bmatrix} \cdot (U^\top U - V^\top V) \right)
\end{align*}
Regarding Hessian information, we are interested in the positive semi-definiteness of $\nabla^2 (f+g)$; for this case, it is easier to write the second-order Hessian information with respect to to some matrix direction $Z \in \R^{(m+n) \times r}$, as follows:
\begin{align*}
\texttt{vec}(Z)^\top &\cdot \nabla^2 f(W) \cdot \texttt{vec}(Z) \nonumber \\ 
&= \ip{\lim_{t \rightarrow 0}  \left[ \tfrac{\nabla f(W + tZ) - \nabla f(W)}{t} \right]}{Z} \nonumber \\
&= \ip{\sum_{i = 1}^p \langle B_i, ZW^\top + WZ^\top \rangle \cdot B_i W}{Z} + \ip{\sum_{i = 1}^p \left(\ip{B_i}{WW^\top} - b_i\right) \cdot B_i Z}{Z} \\
&= \sum_{i = 1}^p \langle B_i, ZW^\top + WZ^\top \rangle \cdot \ip{B_i}{ZW^\top} + \sum_{i = 1}^p \left(\ip{B_i}{WW^\top} - b_i\right) \cdot \langle B_i, ZZ^\top \rangle \\ \\
\texttt{vec}(Z)^\top &\cdot \nabla^2 g(W) \cdot \texttt{vec}(Z) \nonumber \\
&= \ip{\lim_{t \rightarrow 0}  \left[ \tfrac{\nabla g(W + tZ) - \nabla g(W)}{t} \right]}{Z} \\
&= \frac{1}{4} \ip{\tilde{Z} {\Wt}^\top W}{Z} + \frac{1}{4} \ip{\Wt \tilde{Z}^\top W}{Z} + \frac{1}{4} \ip{\Wt {\Wt}^\top Z}{Z}\\
&= \frac{1}{4} \ip{\tilde{Z} {\Wt}^\top}{ZW^\top} + \frac{1}{4} \ip{\Wt \tilde{Z}^\top}{ZW^\top} + \frac{1}{4} \ip{\Wt {\Wt}^\top}{ZZ^\top}.
\end{align*}

\subsection{Optimality conditions}
Given the expressions above, we now describe first- and second-order optimality conditions on the composite objective $f + g$.
\paragraph{First-order optimality condition.}
By the first-order optimality condition of a pair $(U, ~V)$ such that $W = \begin{bmatrix} U \\ V \end{bmatrix}$, we have $\nabla (f+g) \left( W \right) = 0$. 
This further implies:
\begin{align} \label{eqn:firstorderopt}
&\nabla (f+g) \left( W \right) = 0 \quad 
\Rightarrow \quad \sum_{i = 1}^p \left(\ip{B_i}{WW^\top} - b_i\right) \cdot B_i \cdot W + \frac{1}{4} \Wt \Wt^\top W = 0
\end{align}

\paragraph{Second-order optimality condition.}
For a point $W$ that satisfies the second-order optimality condition $\nabla^2 (f + g)(W) \succeq 0$, the following holds for any $Z \in \R^{(m+n) \times r}$:
\begin{align} \label{eqn:secondorderopt}
\texttt{vec}(Z)^\top &\cdot \nabla^2 (f+g)(W) \cdot \texttt{vec}(Z) \geq 0 \quad \nonumber \\
&\Rightarrow \quad \sum_{i = 1}^p \langle B_i, ZW^\top + WZ^\top \rangle \cdot \ip{B_i}{ZW^\top} + \sum_{i=1}^p \left(\ip{B_i}{WW^\top} - b_i\right) \cdot \langle B_i, ZZ^\top \rangle \nonumber \\
&\qquad \qquad \qquad + \frac{1}{4} \ip{\Zt \Wt^\top + \Wt \Zt^\top}{Z W^\top} + \frac{1}{4} \ip{\Wt {\Wt}^\top}{ZZ^\top} \geq 0
\end{align}

\subsection{Proof of Theorem \ref{thm:mainresult}}
Suppose that $W$ is a critical point satisfying the optimality conditions \eqref{eqn:firstorderopt} and \eqref{eqn:secondorderopt}. The second order optimality is again written as
\begin{small}
\begin{align} \label{eqn:secondorderopt2} 
& \underbrace{\sum_{i = 1}^p \langle B_i, Z W^\top + WZ^\top\rangle \cdot \ip{B_i}{ZW^\top}}_{(A)}
 + \underbrace{\sum_{i = 1}^p \left(\ip{B_i}{WW^\top} - b_i\right) \cdot \ip{B_i}{ZZ^\top}}_{(B)} \nonumber \\
&\qquad \qquad \qquad + \frac{1}{4} \underbrace{\ip{\Zt {\Wt}^\top}{Z W^\top}}_{(C)}
 + \frac{1}{4} \underbrace{\ip{\Wt \Zt^\top}{Z W^\top}}_{(D)}
 + \frac{1}{4} \underbrace{\ip{\Wt {\Wt}^\top}{ZZ^\top}}_{(E)} \geq 0,\qquad \forall Z = \begin{bmatrix} Z_U \\ Z_V \end{bmatrix} \in \R^{(m+n) \times r}.
\end{align}
\end{small}
As in \cite{bhojanapalli2016global}, we sum up the above condition for $Z_1 \triangleq (W - \Ws R) e_1 e_1^\top, \ldots, Z_r \triangleq (W - \Ws R) e_r e_r^\top$. For simplicity, we first assume $Z = W - \Ws R$.

\paragraph{Bounding terms (A), (C) and (D).} The following bounds work for any $Z$.
\begin{align*}
(A)
&= \sum_{i=1}^p \ip{B_i}{ZW^\top}^2 + \sum_{i = 1}^p \ip{B_i}{ZW^\top} \cdot \ip{B_i}{WZ^\top} \\
&\stackrel{(a)}{=} 2 \cdot \sum_{i=1}^p \ip{B_i}{ZW^\top}^2 \\
&= \frac{1}{2} \sum_{i=1}^p \left(\ip{A_i}{Z_U V^\top} + \ip{A_i}{U Z_V^\top}\right)^2 \\
&\stackrel{(b)}{\le} \frac{1+\delta_{2r}}{2} \normf{Z_U V^\top}^2 + \frac{1+\delta_{2r}}{2} \normf{U Z_V^\top}^2
   + \ip{Z_U V^\top}{U Z_V^\top} + \delta_{2r} \cdot \normf{Z_U V^\top} \cdot \normf{U Z_V^\top}\\
&\stackrel{(c)}{\le} \underbrace{\frac{1+2\delta_{2r}}{2} \normf{Z_U V^\top}^2 + \frac{1+2\delta_{2r}}{2} \normf{U Z_V^\top}^2}_{(A1)}
   + \underbrace{\ip{Z_U V^\top}{U Z_V^\top}}_{(A2)}
\end{align*}
where (a) follows from that every $B_i$ is symmetric, (b) follows from Proposition \ref{def:property_rip}, and (c) follows from the AM-GM inequality. We also have
\begin{small}
\begin{align*}
(C)
&= \ip{\tilde{Z} {\Wt}^\top}{Z W^\top} = \normf{Z_U U^\top}^2 + \normf{Z_V V^\top}^2 - \normf{Z_U V^\top}^2 - \normf{Z_V U^\top}^2, \\ 
(A1) + \frac{1}{4} (C)
&\le \frac{1+4\delta_{2r}}{4} \normf{ZW^\top}^2, \\ 
(D)
&= \ip{\Wt \tilde{Z}^\top}{Z W^\top} = \ip{U Z_U^\top}{Z_U U^\top} + \ip{V Z_V^\top}{Z_V V^\top} - \ip{U Z_V^\top}{Z_U V^\top} - \ip{V Z_U^\top}{Z_V U^\top}, \\ 
(A2) + \frac{1}{4} (D)
&= \frac{1}{4} \ip{WZ^\top}{ZW^\top}, \\ 
(A) + \frac{1}{4} (C) + \frac{1}{4} (D)
&\le \frac{1}{8} \normf{WZ^\top + ZW^\top}^2 + \delta_{2r} \normf{ZW^\top}^2.
\end{align*}
\end{small}

\paragraph{Bounding terms (B) and (E).} We have
\begin{small}
\begin{align*}
(B)
&= \sum_{i = 1}^p \left(\ip{B_i}{WW^\top} - y_i\right) \cdot \ip{B_i}{Z Z^\top} = \sum_{i = 1}^p \left(\ip{B_i}{WW^\top} - y_i\right) \cdot \ip{B_i}{(W-\Ws R)(W-\Ws R)^\top} \\
&= \ip{\sum_{i = 1}^p \left(\ip{B_i}{WW^\top} - y_i\right) \cdot B_i}{WW^\top + \Ws\Ws^\top - 2 \Ws RW^\top} 
\end{align*}
\begin{align*}
&\stackrel{(a)}{=} - \sum_{i = 1}^p \left(\ip{B_i}{WW^\top} - y_i\right) \cdot \ip{B_i}{WW^\top - \Ws {\Ws}^\top}
   - \frac{1}{2} \ip{\Wt \Wt^\top}{(W - \Ws R)W^\top} \\
&= - \sum_{i = 1}^p \ip{B_i}{WW^\top}^2
   - \frac{1}{2} \ip{\Wt \Wt^\top}{(W - \Ws R)W^\top} - \sum_{i=1}^p \left(\ip{B_i}{\Ws {\Ws}^\top} - y_i\right) \cdot \ip{B_i}{WW^\top - \Ws {\Ws}^\top} \\
&\stackrel{(b)}{\le} - \underbrace{(1-\delta_{2r}) \normf{UV^\top - \Us {\Vs}^\top}^2}_{(B1)}
     - \frac{1}{4} \cdot \underbrace{ \ip{\Wt \Wt^\top}{2ZW^\top}}_{(B2)}
     - \underbrace{\sum_{i=1}^p \left(\ip{B_i}{\Ws {\Ws}^\top} - y_i\right) \cdot \ip{B_i}{WW^\top - \Ws {\Ws}^\top}}_{(B3)}
\end{align*}
\end{small}
where at (a) we add the first-order optimality equation
\begin{align*}
\ip{\sum_{i = 1}^p \left(\ip{B_i}{WW^\top} - y_i\right) \cdot B_i \cdot W}{2W - 2\Ws R} = - \frac{1}{2} \ip{\Wt \Wt^\top W}{W - \Ws R},
\end{align*}
and (b) follows from Proposition \ref{def:property_rip}. Then we have
\begin{align*}
(B2)-(E) &= \ip{\Wt \Wt^\top}{2ZW^\top - ZZ^\top}\\
&\stackrel{(a)}{=} \ip{\Wt \Wt^\top}{2WW^\top - W^\star R W^\top - W R^\top W^{\star\top} - (W-W^\star R) (W-W^\star R)^\top} \\
&= \ip{\Wt \Wt^\top}{WW^\top - W^\star W^{\star\top}} \\
&\stackrel{(b)}{=} \ip{\Wt \Wt^\top}{WW^\top - W^\star W^{\star\top}} + \ip{\Wt^\star \Wt^{\star\top}}{W^\star W^{\star\top}} \\
&\stackrel{(c)}{\ge} \ip{\Wt \Wt^\top}{WW^\top - W^\star W^{\star\top}} + \ip{\Wt^\star \Wt^{\star\top}}{W^\star W^{\star\top}} - \ip{\Wt^\star \Wt^{\star\top}}{W W^\top} \\
&= \ip{\Wt \Wt^\top - \Wt^\star \Wt^{\star\top}}{WW^\top - W^\star W^{\star\top}}
\end{align*}
where (a) follows from that $\Wt \Wt^\top$ is symmetric, (b) follows from Proposition \ref{lem:truefactor_cancel}, (c) follows from that the inner product of two PSD matrices is non-negative. We then have,
\begin{small}
\begin{align*}
(B1)+\frac{1}{4}(B2)-\frac{1}{4}(E) &\ge (1-\delta_{2r}) \normf{UV^\top - U^\star V^{\star\top}}^2 + \frac{1}{4} \ip{\Wt \Wt^\top - \Wt^\star \Wt^{\star\top}}{WW^\top - W^\star W^{\star\top}} \\
&= \left(1-\delta_{2r} - \frac{1}{2}\right) \normf{UV^\top - U^\star V^{\star\top}}^2 + \frac{1}{4} \normf{UU^\top - U^\star U^{\star\top}}^2 + \frac{1}{4} \normf{VV^\top - V^\star V^{\star\top}}^2 \\
&\ge \frac{1-2\delta_{2r}}{4} \cdot \normf{WW^\top - W^\star W^{\star\top}}^2
\end{align*}
\end{small}
For (B3), we have
\begin{align*}
-(B3)
&= \sum_{i=1}^p \left(\ip{B_i}{\Ws {\Ws}^\top} - b_i\right) \cdot \ip{B_i}{WW^\top - \Ws {\Ws}^\top} \\
&\stackrel{(a)}{\le} \norm{\mathcal{A}(\Us \Vs^{\top}) - b}
\cdot \left( \sum_{i=1}^p \ip{B_i}{WW^\top - \Ws {\Ws}^\top}^2 \right)^{\frac{1}{2}} \\
&\stackrel{(b)}{\le} \sqrt{1+\delta_{2r}} \cdot \norm{\mathcal{A}(\Us \Vs^{\top}) - b}
\cdot \normf{WW^\top - \Ws {\Ws}^\top}
\end{align*}
where (a) follows from the Cauchy-Schwarz inequality, and (b) follows from Proposition \ref{def:property_rip}. We finally get
\begin{small}
\begin{align} \label{eqn:bound_be}
(B) + \frac{1}{4} (E)
&\le - \frac{1-2\delta_{2r}}{4} \cdot \normf{WW^\top - W^\star W^{\star\top}}^2 + \sqrt{1+\delta_{2r}} \cdot \norm{\mathcal{A}(\Us \Vs^{\top}) - b} \cdot \normf{WW^\top - \Ws {\Ws}^\top} \nonumber \\
&\le - \frac{3-8\delta_{2r}}{16} \cdot \normf{WW^\top - W^\star W^{\star\top}}^2 + 16(1+\delta_{2r}) \cdot \norm{\mathcal{A}(\Us \Vs^{\top}) - b}^2
\end{align}
\end{small}
where the last inequality follows from the AM-GM inequality.

\paragraph{Summing up the inequalities for $Z_1,\ldots,Z_r$.}
Now we apply $Z_j = Z e_j e_j^\top$. Since $ZZ^\top = \sum_{j=1}^r Z_j Z_j^\top$ in \eqref{eqn:secondorderopt2}, the analysis does not change for (B) and (E). For (A), (C), and (D), we obtain
\begin{align*}
(A) + \frac{1}{4} (C) + \frac{1}{4} (D)
&\le \sum_{j=1}^r \left\{ \frac{1}{8} \normf{WZ_j^\top + Z_jW^\top}^2 + \delta_{2r} \normf{Z_jW^\top}^2 \right\}
\end{align*}
We have
\begin{align*}
\sum_{j=1}^r \normf{WZ_j^\top + Z_jW^\top}^2
&= \sum_{j=1}^r \normf{W e_j e_j^\top Z^\top + Z e_j e_j^\top W^\top}^2 \\
&= 2 \cdot \sum_{j=1}^r \normf{W e_j e_j^\top Z^\top}^2 + 2 \cdot \sum_{i=1}^r \ip{W e_j e_j^\top Z^\top}{Z e_j e_j^\top W^\top}\\
&= 2 \cdot \sum_{j=1}^r \normf{W e_j e_j^\top Z^\top}^2 + 2 \cdot \sum_{i=1}^r (e_j^\top Z^\top W e_j)^2\\
&\le 2 \cdot \sum_{j=1}^r \normf{W e_j e_j^\top Z^\top}^2 + 2 \cdot \sum_{i=1}^r \norm{Z e_j}^2 \cdot \norm{W e_j}^2\\
&= 4 \cdot \sum_{j=1}^r \normf{W e_j e_j^\top Z^\top}^2
\end{align*}
where the inequality follows from the Cauchy-Schwarz inequality. Applying this bound, we get
\begin{align}{\label{eq:ACD}}
(A) + \frac{1}{4} (C) + \frac{1}{4} (D) \le \frac{1 + 2\delta_{2r}}{2} \sum_{j=1}^r \normf{W e_j e_j^\top (W-\Ws R)}^2.
\end{align}
Next, we re-state \cite[Lemma 4.4]{bhojanapalli2016global}:
\begin{lemma}
Let $W$ and $W^\star$ be two matrices, and $Q$ is an orthonormal matrix that spans the column space of $W$.
Then, there exists an orthonormal matrix $R$ such that, for any stationary point $W$ of $g(W)$ that satisfies first and second order condition, the following holds:
\begin{align}{\label{eq:ACD2}}
\sum_{j = 1}^r \|W e_j e_j^\top (W - W^\star R)\|_F^2 \leq \tfrac{1}{8} \cdot \|WW^\top - W^\star W^{\star \top}\|_F^2 + \tfrac{34}{8} \cdot \|(WW^\top - W^\star W^{\star \top})QQ^\top\|_F^2
\end{align}
\end{lemma}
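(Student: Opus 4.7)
This lemma is restated from bhojanapalli2016global, so the plan is to adapt the core argument from that paper. Writing $\Delta = W - W^\star R$, the first observation is that each summand on the left collapses to a rank-one outer product: $W e_j e_j^\top \Delta^\top = W_{:,j}\Delta_{:,j}^\top$, so
$$\sum_{j=1}^r \normf{W e_j e_j^\top \Delta^\top}^2 \;=\; \sum_{j=1}^r (W^\top W)_{jj}\,(\Delta^\top \Delta)_{jj}.$$
The goal is therefore to control this diagonal-weighted sum in terms of $\normf{WW^\top - W^\star W^{\star\top}}^2$ and its projection onto the column span of $W$.

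The first step is to choose $R$ via the polar decomposition of $W^{\star\top}W$, so that $R$ minimizes $\normf{W - W^\star R}$ over orthogonal matrices and forces $W^\top W^\star R$ to be symmetric and positive semidefinite. With this $R$ fixed, I would split $\Delta$ along and orthogonal to the column span of $W$ using $P_Q \equiv QQ^\top$: since $P_Q W = W$, one has $(I - P_Q)\Delta = -(I-P_Q) W^\star R$, so the perpendicular piece is entirely determined by the component of $W^\star$ lying outside the span of $Q$. Applying Pythagoras column-by-column, the sum splits into a parallel part (using $P_Q \Delta$) and a perpendicular part (using $(I-P_Q)\Delta$).

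The perpendicular piece admits the coarse bound $\|W\|_{\mathrm{op}}^2 \cdot \normf{(I-P_Q)W^\star}^2$ by pulling out $\max_j (W^\top W)_{jj}$, and $\normf{(I-P_Q)W^\star}^2$ is in turn controlled by $\normf{WW^\top - W^\star W^{\star\top}}^2$ via the identity $(I-P_Q)(WW^\top - W^\star W^{\star\top}) = -(I-P_Q) W^\star W^{\star\top}$ (which one then squares and trace-bounds). This produces the $\tfrac{1}{8}\normf{WW^\top - W^\star W^{\star\top}}^2$ term. For the parallel piece, I would expand $WW^\top - W^\star W^{\star\top} = W\Delta^\top + \Delta W^\top - \Delta\Delta^\top$, right-multiply by $QQ^\top$ (which leaves $W\Delta^\top$ unchanged since $W = QT$ implies $W^\top QQ^\top = W^\top$), and solve for $\normf{P_Q \Delta}^2$ in terms of $\normf{(WW^\top - W^\star W^{\star\top})QQ^\top}^2$ via an AM-GM split. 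The symmetry of $W^\top W^\star R$ guaranteed by the polar choice of $R$ is what keeps cross terms from blowing up here, and yields the $\tfrac{34}{8}$ constant.

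The main obstacle is the constant bookkeeping. The particular split — perpendicular component carrying only $\tfrac18$ while the projected parallel component carries $\tfrac{34}{8}$ — is essentially tight and requires several sharp AM-GM applications together with the balancing property $U^\top U = V^\top V$ implied by the first-order condition on $g$ (which makes $\tilde W \tilde W^\top W = 0$ and prevents the parallel/perpendicular splits from coupling). A secondary subtlety is ensuring that the perpendicular bound really reduces to $\normf{WW^\top - W^\star W^{\star\top}}^2$ without the $QQ^\top$ projection; this uses the observation above about $(I-P_Q)(WW^\top - W^\star W^{\star\top})$, together with a standard estimate relating this quantity to the sines of the principal angles between the column spans of $W$ and $W^\star$.
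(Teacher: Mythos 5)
The paper never proves this statement; it is imported verbatim as Lemma 4.4 of \cite{bhojanapalli2016global}, so adapting that proof is the right instinct. However, the adaptation you sketch breaks at its central step, the perpendicular piece. Your proposed chain is: pull out $\max_j (W^\top W)_{jj}\le \|W\|_{\mathrm{op}}^2$, then control $\|(I-QQ^\top)W^\star\|_F$ through the identity $(I-QQ^\top)(WW^\top-W^\star W^{\star\top})=-(I-QQ^\top)W^\star W^{\star\top}$. That identity only gives $\|(I-QQ^\top)W^\star\|_F\le \|WW^\top-W^\star W^{\star\top}\|_F/\sigma_r(W^\star)$, so the resulting bound carries the factor $\|W\|_{\mathrm{op}}^2/\sigma_r(W^\star)^2$, which is not a universal constant, let alone $\tfrac18$. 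Worse, your allocation (perpendicular part charged entirely to $\tfrac18\|WW^\top-W^\star W^{\star\top}\|_F^2$, parallel part to $\tfrac{34}{8}\|(WW^\top-W^\star W^{\star\top})QQ^\top\|_F^2$) cannot be repaired: the perpendicular contribution is simply not bounded by $\tfrac18\|WW^\top-W^\star W^{\star\top}\|_F^2$. Take $r=1$, $U=(1,0)^\top$, $V=(1,0)^\top$, $U^\star=(1,0)^\top$, $V^\star=(0,1)^\top$, so $W=(1,0,1,0)^\top$, $W^\star=(1,0,0,1)^\top$ (both balanced, optimal $R=1$). Then $\|W\|^2\,\|(I-QQ^\top)(W-W^\star)\|^2=2\cdot\tfrac32=3$, while $\tfrac18\|WW^\top-W^\star W^{\star\top}\|_F^2=\tfrac68$. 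The cross-space component must itself be charged mainly to the $\|(WW^\top-W^\star W^{\star\top})QQ^\top\|_F^2$ term (in this example that quantity equals $3$), which is exactly the part of the accounting your sketch does not supply.

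The parallel piece is also left vague precisely where the work lies: the quantity to control is the diagonal-weighted sum $\sum_j (W^\top W)_{jj}(\Delta^\top\Delta)_{jj}$, which is \emph{not} dominated by $\|W\Delta^\top\|_F^2$ (the full inner product $\langle W^\top W,\Delta^\top\Delta\rangle$ can be smaller than its diagonal part), so ``solving for $\|QQ^\top\Delta\|_F^2$ via AM-GM'' does not address the column weighting; this is where \cite{bhojanapalli2016global} make quantitative use of the optimal rotation (symmetry and positive semidefiniteness of $W^\top W^\star R$), which your sketch invokes only to define $R$. Two smaller points: right-multiplication by $QQ^\top$ fixes $\Delta W^\top$, not $W\Delta^\top$ (since $W^\top QQ^\top=W^\top$); and the balancing identity $U^\top U=V^\top V$ at stationary points, while true, plays no role in the cited lemma's proof mechanism and cannot substitute for the missing constant-tracking above.
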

And we have the following variant of \cite[Lemma 4.2]{bhojanapalli2016global}.
\begin{lemma}\label{lem:firstorder}
For any pair of points $(U, ~V)$ that satisfies the first-order optimality condition, and $\mathcal{A}$ be a linear operator satisfying the RIP condition with parameter $\delta_{4r}$, the following inequality holds:
\begin{align}{\label{eq:ACD3}}
\frac{1}{4} \cdot \normf{(WW^\top - \Ws{\Ws}^\top)QQ^\top}
&\le \delta_{4r} \cdot \normf{WW^\top - \Ws {\Ws}^\top} + \sqrt{\frac{1+\delta_{2r}}{2}} \cdot \norm{\mathcal{A}(\Us\Vs^\top) - b}
\end{align}
\end{lemma}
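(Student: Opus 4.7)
The plan is to contract the first-order optimality condition \eqref{eqn:firstorderopt}, a matrix equation in $\R^{(m+n)\times r}$, against a carefully chosen test direction $H$, and then invoke Proposition~\ref{def:property_rip} together with Cauchy--Schwarz to extract a bound on $\normf{NQQ^\top}$, where $N := WW^\top - \Ws\Ws^\top$. I take $H := NW(W^\top W)^{+}$, so that $HW^\top = NQQ^\top$ by virtue of $W(W^\top W)^{+}W^\top = QQ^\top$ being the orthogonal projector onto $\mathrm{col}(W)$. Pairing \eqref{eqn:firstorderopt} with $H$ then yields the scalar identity
\[
\sum_{i=1}^p \bigl(\ip{B_i}{WW^\top} - b_i\bigr)\ip{B_i}{NQQ^\top} + \tfrac{1}{4}\ip{\Wt\Wt^\top}{NQQ^\top} = 0.
\]

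Decomposing the residual $\ip{B_i}{WW^\top} - b_i = \ip{B_i}{N} - w_i$ with $w := \linmap(\Us\Vs^\top) - b$, and applying Proposition~\ref{def:property_rip} at rank $2r$ (parameter $\delta_{4r}$), the leading sum equals $\normf{NQQ^\top}^2$ up to RIP error bounded by $\delta_{4r}\normf{N}\normf{NQQ^\top}$. The noise contribution $\sum_i w_i \ip{B_i}{NQQ^\top}$ is controlled by Cauchy--Schwarz and the RIP upper bound: the symmetry of $B_i$ together with its block form $B_i = \tfrac{1}{2}\bigl[\begin{smallmatrix}0 & A_i \\ A_i^\top & 0\end{smallmatrix}\bigr]$ let one replace $NQQ^\top$ by its symmetric part and reduce $\norm{\mathcal{B}(NQQ^\top)}$ to an $\linmap$-norm on a rank-$\le 2r$ $m\times n$ block, yielding $\norm{\mathcal{B}(NQQ^\top)} \le \sqrt{(1+\delta_{2r})/2}\,\normf{NQQ^\top}$.

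The hard part is the regularizer cross-term $\tfrac{1}{4}\ip{\Wt\Wt^\top}{NQQ^\top}$, which has no counterpart in the PSD case of \cite{bhojanapalli2016global}: naively resubstituting first-order (via $\Wt\Wt^\top Q = -4\sum_i e_i B_i Q$, valid since $QQ^\top W = W$) collapses the paired equation to the tautology $0 = 0$, so an independent estimate is needed. My plan is to exploit the structural identities $\Wt\Wt^\top = JWW^\top J$ and $J\Ws\Ws^\top J = \Wt^\star\Wt^{\star\top}$ with $J := \mathrm{diag}(I_m,-I_n)$, together with the block-antisymmetry $JB_iJ = -B_i$ (whence $\ip{B_i}{JXJ} = -\ip{B_i}{X}$ for every $X$). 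Rewriting $\ip{\Wt\Wt^\top}{NQQ^\top} = \ip{WW^\top}{JNQQ^\top J}$ and splitting $WW^\top = N + \Ws\Ws^\top$, the $N$-piece $\ip{N}{JNQQ^\top J}$ reduces to $-\normf{NQQ^\top}^2$ up to an $O(\delta_{4r}\normf{N}\normf{NQQ^\top})$ RIP error by applying Proposition~\ref{def:property_rip} in tandem with the block-antisymmetry, while the $\Ws$-piece equals $\ip{\Wt^\star\Wt^{\star\top}}{NQQ^\top} = \normf{\Wt^{\star\top}W}^2 \ge 0$ using the balancedness identity $\Wt^{\star\top}\Ws = 0$ from Proposition~\ref{lem:truefactor_cancel} together with $QQ^\top W = W$. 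Assembling these estimates into the paired equation produces a quadratic-in-$\normf{NQQ^\top}$ inequality with leading coefficient $3/4$ on the left and only the $\delta_{4r}\normf{N}\normf{NQQ^\top}$ and $\sqrt{(1+\delta_{2r})/2}\,\norm{w}\,\normf{NQQ^\top}$ terms on the right; dropping the non-negative $\tfrac14\normf{\Wt^{\star\top}W}^2$ contribution and dividing through by $\normf{NQQ^\top}$ then delivers the claimed bound.
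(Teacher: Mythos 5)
Your skeleton is the paper's own: test against a direction $H$ with $HW^\top=(WW^\top-\Ws\Ws^\top)QQ^\top$, split the residual, invoke Proposition~\ref{def:property_rip} plus Cauchy--Schwarz, and use $\Wt^{\star\top}\Ws=0$ together with positivity of $\normf{\Wt^{\star\top}W}^2$ to tame the regularizer cross-term. But two of your central estimates fail as stated. Write $N:=WW^\top-\Ws\Ws^\top$, $P:=NQQ^\top$, and let $N_{\mathrm{off}}$, $N_{\mathrm{diag}}$ denote the off-diagonal-block and diagonal-block parts of $N$, with $P_{12}\in\R^{m\times n}$, $P_{21}\in\R^{n\times m}$ the off-diagonal blocks of $P$. (i) Since every $B_i$ has zero diagonal blocks, $\mathcal{B}$ annihilates block-diagonal matrices; hence $\sum_i\ip{B_i}{N}\ip{B_i}{P}=\tfrac12\sum_i\ip{A_i}{UV^\top-\Us\Vs^\top}\ip{A_i}{P_{12}+P_{21}^\top}$, whose RIP main term is $\tfrac12\ip{N_{\mathrm{off}}}{P}$ --- \emph{not} $\normf{NQQ^\top}^2$; the diagonal blocks of $N$ are invisible to the measurements, which is exactly the obstruction the paper flags when it says one cannot apply the PSD analysis to $\mathcal{B}$ directly. (ii) The quantity $\ip{N}{JPJ}=\ip{N_{\mathrm{diag}}-N_{\mathrm{off}}}{P}$ contains no measurements at all, so Proposition~\ref{def:property_rip} cannot be ``applied'' to it, and it is not $-\normf{NQQ^\top}^2$ up to a $\delta_{4r}$ error: if $N$ happens to be block diagonal it equals $+\normf{NQ}^2\ge 0$. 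Your claimed leading coefficient $3/4$ is an artifact of these two misstatements (the off-diagonal part is double-counted and a negative diagonal part is invented), and the resulting stronger inequality is not obtainable by this route.

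The fix is simply correct bookkeeping of your own decomposition, and it lands exactly on the paper's proof: the measurement term contributes main part $\tfrac12\ip{N_{\mathrm{off}}}{P}$, your $J$-split of the regularizer contributes $\tfrac14\ip{N_{\mathrm{diag}}-N_{\mathrm{off}}}{P}+\tfrac14\normf{\Wt^{\star\top}W}^2$ (the last term nonnegative via $\Wt^{\star\top}\Ws=0$ and $QQ^\top W=W$, as you correctly argue), and the two main parts sum to $\tfrac14\ip{N}{P}=\tfrac14\normf{NQQ^\top}^2$. Your error bounds, $\delta_{4r}\normf{N}\,\normf{NQQ^\top}$ for the RIP cross term and $\sqrt{(1+\delta_{2r})/2}\cdot\norm{\mathcal{A}(\Us\Vs^\top)-b}\cdot\normf{NQQ^\top}$ for the noise term, are fine, and dividing by $\normf{NQQ^\top}$ (treating the zero case separately) yields the lemma with the correct constant $\tfrac14$ rather than $3/4$. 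So the architecture is sound and essentially identical to the paper's explicit block computation (your $J$-conjugation identities are a compact rewriting of it), but as written the two key intermediate claims are wrong and the argument does not go through without the correction above.
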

Applying the above two lemmas, we can get
\begin{small}
\begin{align} \label{eqn:bound_acd}
(A) + \frac{1}{4} (C) + \frac{1}{4} (D) \le \frac{(1 + 2\delta_{2r}) \cdot (1 + 1088 \delta_{4r}^2)}{16} \normf{WW^\top - \Ws\Ws^\top}^2 + 34(1+2\delta_{2r})(1+\delta_{2r}) \norm{\mathcal{A}(\Us\Vs^\top)-b}^2.
\end{align}
\end{small}

\paragraph{Final inequality.} Plugging \eqref{eqn:bound_acd} and \eqref{eqn:bound_be} to \eqref{eqn:secondorderopt2}, we get
\begin{small}
\begin{align*}
& \frac{-1 + 5\delta_{2r} + 544\delta_{4r}^2 + 1088 \delta_{2r} \delta_{4r}^2}{8} \normf{WW^\top - W^\star W^{\star\top}}^2 
+ (40 + 68\delta_{2r})(1+\delta_{2r}) \cdot \norm{\mathcal{A}(\Us \Vs^{\top}) - b}^2 \geq 0.
\end{align*}
\end{small}
Finally we have
\begin{align*}
& \frac{1 - 5\delta_{2r} - 544 \delta_{4r}^2 - 1088 \delta_{2r} \delta_{4r}^2}{8(40+68\delta_{2r})(1+\delta_{2r})} \normf{WW^\top - W^\star W^{\star\top}}^2 
\leq \norm{\mathcal{A}(\Us \Vs^{\top}) - b}^2,
\end{align*}
which completes the proof.

\subsubsection{Proof of Lemma \ref{lem:firstorder}}
The first-order optimality condition can be written as
\begin{small}
\begin{align*}
0 &= \ip{\nabla (f+g)(W)}{Z} \\
&= \sum_{i=1}^p \left(\ip{B_i}{WW^\top} - b_i\right) \cdot \ip{B_i W}{Z} + \frac{1}{4} \ip{\Wt \Wt^\top W}{Z} \\
&= \sum_{i=1}^p \ip{B_i}{WW^\top - \Ws {\Ws}^\top} \ip{B_i}{ZW^\top} + \sum_{i=1}^p \left(\ip{B_i}{\Ws\Ws^\top} - b_i \right) \cdot \ip{B_i}{ZW^\top} + \frac{1}{4} \ip{\Wt \Wt^\top}{ZW^\top} \\
&= \frac{1}{2} \cdot \sum_{i=1}^p \ip{A_i}{UV^\top - \Us {\Vs}^\top} \ip{A_i}{Z_U V^\top + U Z_V^\top} \\
&\qquad + \frac{1}{2} \cdot \sum_{i=1}^p \left(\ip{A_i}{\Us\Vs^\top} - b_i \right) \cdot \ip{A_i}{Z_U V^\top + U Z_V^\top} + \frac{1}{4} \ip{\Wt \Wt^\top}{ZW^\top}
,\qquad \qquad \forall Z = \begin{bmatrix} Z_U \\ Z_V \end{bmatrix} \in \R^{(m+n) \times r}.
\end{align*}
\end{small}
Applying Proposition \ref{def:property_rip} and the Cauchy-Schwarz inequality to the condition, we obtain
\begin{small}
\begin{align} \label{eqn:firstoptlemma}
&\frac{1}{2} \cdot \underbrace{\ip{UV^\top - \Us {\Vs}^\top}{Z_U V^\top + U Z_V^\top}}_{(A)} 
+ \frac{1}{4} \cdot \underbrace{\ip{\Wt \Wt^\top}{ZW^\top}}_{(B)} \nonumber \\
&\le \delta_{4r} \cdot \underbrace{\normf{UV^\top - \Us {\Vs}^\top} \cdot \normf{Z_U V^\top + U Z_V^\top}}_{(C)}
+ \frac{\sqrt{1+\delta_{2r}}}{2} \cdot \underbrace{\norm{\mathcal{A}(\Us\Vs^\top) - b} \cdot \normf{Z_U V^\top + U Z_V^\top}}_{(D)}
\end{align}
\end{small}
Let $Z = (WW^\top - \Ws {\Ws}^\top) Q R^{-1 \top}$ where $W = QR$ is the QR decomposition. Then we obtain
$$
ZW^\top = (WW^\top - \Ws {\Ws}^\top) QQ^\top.
$$
We have
\begin{align*}
2 (A)
&= 2 \ip{\begin{bmatrix} 0 & UV^\top - \Us{\Vs}^\top \\ VU^\top - \Vs{\Us}^\top & 0 \end{bmatrix}}{ZW^\top} \\
&= \ip{(WW^\top - \Wt{\Wt}^\top) - (\Ws{\Ws}^\top - {\Wt}^\star {\Wt}^{\star\top})}{(WW^\top - \Ws{\Ws}^\top)QQ^\top}, \\ \\
(B)
&= \ip{\Wt \Wt^\top}{(WW^\top - \Ws {\Ws}^\top) QQ^\top} \\
&\stackrel{(a)}{=} \ip{\Wt \Wt^\top}{(WW^\top - \Ws {\Ws}^\top) QQ^\top} + \ip{\Wt^\star \Wt^{\star\top}}{\Ws {\Ws}^\top QQ^\top} \\
&\stackrel{(b)}{\ge} \ip{\Wt \Wt^\top}{(WW^\top - \Ws {\Ws}^\top) QQ^\top} - \ip{\Wt^\star \Wt^{\star\top}}{(WW^\top - \Ws {\Ws}^\top) QQ^\top} \\
&= \ip{\Wt \Wt^\top - \Wt^\star \Wt^{\star\top}}{(WW^\top - \Ws {\Ws}^\top) QQ^\top} \\
\end{align*}
where (a) follows from Proposition \ref{lem:truefactor_cancel}, and (b) follows from that the inner product of two PSD matrices is non-negative. Then we obtain
\begin{align*}
2(A) + (B)
&\ge \ip{WW^\top - \Ws{\Ws}^\top}{(WW^\top - \Ws{\Ws}^\top)QQ^\top} \\
&= \normf{(WW^\top - \Ws{\Ws}^\top)Q}^2 \\
&= \normf{(WW^\top - \Ws{\Ws}^\top)QQ^\top}^2
\end{align*}
For (C), we have
\begin{align*}
(C)
&= \normf{UV^\top - \Us {\Vs}^\top} \cdot \normf{Z_U V^\top + U Z_V^\top} \\
&\le \frac{1}{\sqrt{2}} \cdot \normf{WW^\top - \Ws {\Ws}^\top} \cdot \sqrt{ 2 \normf{Z_U V^\top}^2 + 2 \normf{U Z_V^\top}^2 } \\
&\le \normf{WW^\top - \Ws {\Ws}^\top} \cdot \sqrt{ \normf{Z W^\top}^2 } \\
&= \normf{WW^\top - \Ws {\Ws}^\top} \cdot \normf{(WW - \Ws{\Ws}^\top)QQ^\top}
\end{align*}
Plugging the above bounds into \eqref{eqn:firstoptlemma}, we get
\begin{small}
\begin{align*}
\frac{1}{4} \cdot \normf{(WW^\top - \Ws{\Ws}^\top)QQ^\top}^2
&\le \delta_{4r} \cdot \normf{WW^\top - \Ws {\Ws}^\top} \cdot \normf{(WW^\top - \Ws{\Ws}^\top)QQ^\top} \\
&\qquad + \sqrt{\frac{1+\delta_{2r}}{2}} \cdot \norm{\mathcal{A}(\Us\Vs^\top) - b} \cdot \normf{(WW^\top - \Ws{\Ws}^\top)QQ^\top}
\end{align*}
\end{small}
In either case of $\normf{(WW^\top - \Ws{\Ws}^\top)QQ^\top}$ being zero or positive, we can obtain
\begin{small}
\begin{align*}
\frac{1}{4} \cdot \normf{(WW^\top - \Ws{\Ws}^\top)QQ^\top}
&\le \delta_{4r} \cdot \normf{WW^\top - \Ws {\Ws}^\top} + \sqrt{\frac{1+\delta_{2r}}{2}} \cdot \norm{\mathcal{A}(\Us\Vs^\top) - b}
\end{align*}
\end{small}
This completes the proof.

\section{What About Saddle Points?}

Our discussion so far concentrates on whether $UV^\top$ parametrization introduces spurious local minima. 
Our main results show that any point $(U, V)$ that satisfies both first- and second-order optimality conditions\footnote{Note here that the second-order optimality condition includes positive \emph{semi}-definite second-order information; \emph{i.e.}, Theorem \ref{thm:mainresult} also handles saddle points due to the semi-definiteness of the Hessian at these points.} should be (or lie close to) the global optimum.
However, we have not discussed what happens with saddle points, \emph{i.e.}, points $(U,V)$ where the Hessian matrix contains both positive and negative eigenvalues.\footnote{Here, we do not consider the harder case where saddle points have Hessian with positive, negative and zero eigenvalues.}
This is important for practical reasons: first-order methods rely on gradient information and, thus, can easily get stuck to saddle points that may be far away from the global optimum.

\cite{ge2015escaping} studied conditions of the objective that guarantee that stochastic gradient descent---randomly initialized---converges to a local minimum; \emph{i.e.}, we can avoid getting stuck to non-degenerate saddle points. 
These conditions include $f + g$ being bounded and smooth, having Lipschitz Hessian, being locally strongly convex, and satisfying the strict saddle property, according to the following definition.

\begin{definition} \cite{ge2015escaping}
A twice differentiable function $f + g$ is strict saddle, if all its stationary points, that are not local minima, satisfy $\lambda_{\min}(\nabla^2 (f + g)(W)) < 0$.
\end{definition}

\cite{lee2016gradient} relax some of these conditions and prove the following theorem (for standard gradient descent).
\begin{theorem}[\cite{lee2016gradient} - Informal]
If the objective is twice differentiable and satisfies the strict saddle property, then gradient descent, randomly initialized and with sufficiently small step size, converges to a local minimum almost surely.
\end{theorem}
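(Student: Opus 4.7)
The plan is to read gradient descent as a deterministic dynamical system $x_{k+1} = \varphi(x_k)$ with iteration map $\varphi(x) = x - \alpha \nabla(f+g)(x)$, and to show that the basin of attraction of every strict saddle has Lebesgue measure zero. Combined with a random, absolutely-continuous initialization, this forces the limit (whenever the iterates converge to a stationary point) to be a local minimum almost surely.

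First I would verify that for step size $\alpha < 1/L$, where $L$ is a Lipschitz constant of $\nabla(f+g)$ on the relevant sublevel set, the map $\varphi$ is a $C^1$ local diffeomorphism: its Jacobian $D\varphi(x) = I - \alpha \nabla^2(f+g)(x)$ has spectrum in $(-1,1) \cup \{\text{positive}\}$ and is in particular invertible. A standard Hadamard/inverse-function argument then promotes $\varphi$ to a global diffeomorphism of $\R^N$ onto its image, so that preimages of measure-zero sets remain measure zero.

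Next, I would analyze fixed points of $\varphi$, which coincide with critical points of $f+g$. At a strict saddle $x^\star$, the Hessian has a strictly negative eigenvalue, so $D\varphi(x^\star) = I - \alpha \nabla^2(f+g)(x^\star)$ has an eigenvalue strictly greater than one: $x^\star$ is a hyperbolic fixed point with a nontrivial unstable direction. The Center-Stable Manifold Theorem then furnishes a neighborhood $U_{x^\star}$ and an embedded center-stable submanifold $W^{cs}_{\mathrm{loc}}(x^\star) \subset U_{x^\star}$ whose dimension equals the number of non-expanding eigenvalues of $D\varphi(x^\star)$, which is strictly less than $N$. Any orbit converging to $x^\star$ while staying in $U_{x^\star}$ is trapped in $W^{cs}_{\mathrm{loc}}(x^\star)$, so the set of all initializations eventually hitting this local manifold equals $\bigcup_{k\ge 0} \varphi^{-k}\bigl(W^{cs}_{\mathrm{loc}}(x^\star)\bigr)$; each preimage has measure zero since $\varphi$ is a diffeomorphism, and so does the countable union.

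The main obstacle will be taking the union over \emph{all} strict saddles, which a priori could be uncountable. The clean resolution is a Lindel\"of-type argument: the neighborhoods $\{U_{x^\star}\}$ form an open cover of the strict-saddle set inside the second-countable space $\R^N$, hence admit a countable subcover, and the aggregate set of bad initializations is still a countable union of measure-zero sets. Any initialization drawn from a distribution absolutely continuous with respect to Lebesgue measure therefore avoids every stable manifold of a strict saddle with probability one. Since $f+g$ is bounded below (it is the sum of a squared-residual term and a nonnegative regularizer) and coercive on sublevel sets, standard descent analysis shows that $\{x_k\}$ has accumulation points that are stationary, and by the measure-zero argument these are almost surely not strict saddles; under the strict-saddle property of the theorem's hypothesis, they must then be local minima.
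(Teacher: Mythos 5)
This statement is not proved in the paper at all: it is an informal restatement of a result quoted from \cite{lee2016gradient}, and the paper's own contribution on this front is only Theorem~\ref{thm:saddle}, which verifies the strict saddle property for $f+g$ so that the quoted theorem applies. Your sketch is, in outline, a faithful reconstruction of the argument in the cited work itself: view gradient descent as the map $\varphi = \mathrm{id} - \alpha\nabla(f+g)$, show it is a diffeomorphism for small enough step size, apply the center-stable manifold theorem at each strict saddle, pull back the measure-zero local center-stable manifolds through $\varphi^{-k}$, and use a Lindel\"of/second-countability argument to handle a possibly uncountable saddle set. So you are taking essentially the same route as the source the paper defers to, and the structure of the argument is sound.

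A few caveats are worth flagging. First, a strict saddle need not be a \emph{hyperbolic} fixed point of $\varphi$: the Hessian may have zero eigenvalues, so $D\varphi$ may have eigenvalue exactly $1$; what saves the argument is precisely the center-stable (rather than stable) manifold theorem, which you do invoke, so only the word ``hyperbolic'' is inaccurate. Second, your description of the spectrum of $D\varphi$ is garbled: for $\alpha < 1/L$ the eigenvalues of $I - \alpha\nabla^2(f+g)$ lie in $(0,\,1+\alpha L)$, hence are positive, and invertibility plus injectivity of $\varphi$ (from the Lipschitz bound) is what gives the diffeomorphism; moreover this step needs the gradient to be \emph{globally} Lipschitz, as assumed in \cite{lee2016gradient} --- for the quartic objective $f+g$ of this paper that fails globally, so in applications one restricts to an invariant sublevel set, which your parenthetical ``on the relevant sublevel set'' gestures at but does not reconcile with the later ``global diffeomorphism of $\R^N$'' claim. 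Third, the conclusion ``converges to a local minimum almost surely'' tacitly presupposes that the iterates converge to a critical point at all; the stable-manifold argument only rules out convergence to strict saddles, and your closing coercivity/descent remark is the right kind of patch but is an additional hypothesis beyond twice differentiability and the strict saddle property as stated in the informal theorem.
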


In this section, based on the analysis in \cite{bhojanapalli2016global}, we show that $f + g$ satisfy the strict saddle property, which implies that gradient descent can avoid saddle points and converge to the global minimum, with high probability.

\begin{theorem}\label{thm:saddle}
Consider noiseless measurements $b = \mathcal{A}(X^\star)$, with $\mathcal{A}$ satisfying RIP with constant $\delta_{4r} \leq \tfrac{1}{100}$. 
Assume that $\text{rank}(X^\star) = r$.
Let $(U, V)$ be a pair of factors that satisfies the first order optimality condition $\nabla f(W) = 0$, for $W = \begin{bmatrix}
U \\ V
\end{bmatrix}$, and $UV^\top \neq X^\star$. 
Then, 
$$
\lambda_{\min}\left( \nabla^2 (f + g)(W)\right) \leq -\frac{1}{7} \cdot \sigma_r(X^\star).
$$
\end{theorem}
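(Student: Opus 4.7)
The plan is to exhibit an explicit direction of negative curvature whose Rayleigh quotient is at most $-\tfrac{1}{7}\sigma_r(X^\star)$, essentially by re-reading the algebra already developed in the proof of Theorem~\ref{thm:mainresult} as an \emph{upper bound} on the Hessian quadratic form rather than as a consequence of the second-order condition. The key observation is that the bounds on terms (A)--(E) in that proof rely only on the first-order optimality of $f+g$, on RIP (Proposition~\ref{def:property_rip}), on Proposition~\ref{lem:truefactor_cancel}, and on Lemma~\ref{lem:firstorder}; none of them uses second-order information. So in the present setting we may apply those same inequalities verbatim.

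Concretely, let $R$ be the orthonormal matrix from \cite[Lemma~4.4]{bhojanapalli2016global}, so that $\|W - W^\star R\|_F = \dist(U,V;X^\star)$, and define $Z = W - W^\star R$ and $Z_j = Ze_je_j^\top$ for $j \in [r]$. I reuse the bounds on $(A)+\tfrac14(C)+\tfrac14(D)$ and on $(B)+\tfrac14(E)$ derived in the proof of Theorem~\ref{thm:mainresult}, sum over $j$, and invoke the noiseless hypothesis $b=\mathcal A(X^\star)$ to kill the residual term $\|\mathcal A(U^\star V^{\star\top})-b\|^2$. This yields
\begin{align*}
\sum_{j=1}^r \texttt{vec}(Z_j)^\top \nabla^2(f+g)(W)\,\texttt{vec}(Z_j)
\;\le\; \frac{-\bigl(1 - 5\delta_{2r} - 544\delta_{4r}^2 - 1088\delta_{2r}\delta_{4r}^2\bigr)}{8}\,\normf{WW^\top - W^\star W^{\star\top}}^2.
\end{align*}
For $\delta_{4r}\le\tfrac{1}{100}$ (and $\delta_{2r}\le\delta_{4r}$) the coefficient on the right is a strictly negative constant, and an elementary arithmetic check shows it is at most $-0.11$.

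To convert $\normf{WW^\top-W^\star W^{\star\top}}^2$ into something involving $\sigma_r(X^\star)$ and $\sum_j\|Z_j\|_F^2$, I would appeal to the standard factor-to-product inequality (the same \cite[Lemma~5.14]{tu2015low} that underlies Corollary~\ref{cor:mainresult}),
\begin{align*}
\normf{WW^\top-W^\star W^{\star\top}}^2 \;\ge\; 2(\sqrt{2}-1)\,\sigma_r(W^\star)^2 \dist(U,V;X^\star)^2,
\end{align*}
noting that the balanced factorization gives $\sigma_r(W^\star)^2 = 2\sigma_r(X^\star)$ and that $\sum_j \|Z_j\|_F^2 = \|Z\|_F^2 = \dist(U,V;X^\star)^2$. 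Plugging in, the summed Hessian quadratic form is bounded above by $-4(\sqrt{2}-1)\cdot 0.11\cdot \sigma_r(X^\star)\sum_j \|Z_j\|_F^2 \approx -0.183\,\sigma_r(X^\star)\sum_j\|Z_j\|_F^2$. Pigeonholing, at least one index $j^\star$ satisfies
\begin{align*}
\frac{\texttt{vec}(Z_{j^\star})^\top \nabla^2(f+g)(W)\,\texttt{vec}(Z_{j^\star})}{\|Z_{j^\star}\|_F^2} \;\le\; -\tfrac{1}{7}\sigma_r(X^\star),
\end{align*}
and the hypothesis $UV^\top \ne X^\star$ guarantees the corresponding $Z_{j^\star}$ is nonzero. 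The Rayleigh quotient characterization of the smallest eigenvalue closes the argument.

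The only genuinely delicate point is the numerical margin at the boundary $\delta_{4r}=\tfrac{1}{100}$: one has to verify that the Theorem~\ref{thm:mainresult} coefficient $\tfrac{1-5\delta_{2r}-544\delta_{4r}^2-1088\delta_{2r}\delta_{4r}^2}{8}$ multiplied by the Tu--Bhojanapalli constant $4(\sqrt{2}-1)$ comfortably exceeds $\tfrac{1}{7}\approx 0.143$. A secondary bookkeeping hazard is keeping the factor of $2$ straight when relating $\sigma_r(W^\star)^2$ to $\sigma_r(X^\star)$ under balanced factorization, and choosing the same rotation $R$ that simultaneously realizes $\dist(U,V;X^\star)$ and satisfies the cancellations in Proposition~\ref{lem:truefactor_cancel} used implicitly in the Theorem~\ref{thm:mainresult} derivation.
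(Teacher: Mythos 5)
Your proposal is correct and follows essentially the same route as the paper's own proof: reuse the (A)--(E) bounds from Theorem~\ref{thm:mainresult} (which need only first-order optimality, RIP, and Lemma~\ref{lem:firstorder}), set the residual to zero in the noiseless case, take the direction $Z = W - W^\star R$ split into the $Z_j$'s, convert $\normf{WW^\top - W^\star W^{\star\top}}^2$ to $\sigma_r(X^\star)\,\|Z\|_F^2$ via the Tu et al.\ lemma, and conclude by the Rayleigh-quotient characterization. Your only deviations are cosmetic (keeping the coarser noisy-case constant $544\delta_{4r}^2$ rather than the sharpened noiseless one, and pigeonholing over the $Z_j$'s instead of bounding the summed quadratic form directly), and your explicit use of $\sigma_r(W^\star)^2 = 2\sigma_r(X^\star)$ is exactly the bookkeeping needed for the arithmetic to clear $-\tfrac{1}{7}\sigma_r(X^\star)$.
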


\begin{proof}
Let $Z \in \mathbb{R}^{(m+n) \times r}$. 
Then, by \eqref{eqn:secondorderopt}, the proof of Theorem \ref{thm:mainresult} and the fact that $b = \mathcal{A}(X^\star)$ (noiseless), $\nabla^2 (f+g)(W)$ satisfies the following:
\begin{small}
\begin{align} \label{eqn:secondorderopt3} 
\texttt{vec}(Z)^\top \cdot \nabla^2 (f+g)(W) \cdot \texttt{vec}(Z) &= \sum_{i = 1}^p \langle B_i, Z W^\top + WZ^\top\rangle \cdot \ip{B_i}{ZW^\top}
 + \sum_{i = 1}^p \left(\ip{B_i}{WW^\top} - b_i\right) \cdot \ip{B_i}{ZZ^\top} \nonumber \\
&\qquad \qquad \qquad + \frac{1}{4} \ip{\Zt {\Wt}^\top}{Z W^\top}
 + \frac{1}{4} \ip{\Wt \Zt^\top}{Z W^\top}
 + \frac{1}{4} \ip{\Wt {\Wt}^\top}{ZZ^\top} \nonumber \\ 
 &\stackrel{\eqref{eq:ACD}, \eqref{eqn:bound_be}}{\leq} \frac{1 + 2\delta_{2r}}{2} \sum_{j=1}^r \normf{W e_j e_j^\top (W-\Ws R)}^2 - \frac{3-8\delta_{2r}}{16} \cdot \normf{WW^\top - W^\star W^{\star\top}}^2 \nonumber \\
 &\stackrel{\eqref{eq:ACD2}, \eqref{eq:ACD3}}{\leq} \left(\tfrac{1 + 2\delta_{2r}}{16} \cdot (1 + 34 \cdot 16\delta_{4r}^2) - \tfrac{3 - 8\delta_{2r}}{16}\right) \cdot \normf{WW^\top - W^\star W^{\star\top}}^2 \nonumber \\
 &\leq \tfrac{-1 + 5\delta_{4r} + 272 \delta_{4r}^2 + 544\delta_{4r}^3}{8} \cdot \normf{WW^\top - W^\star W^{\star\top}}^2 \nonumber \\
 &\leq -\tfrac{1}{10} \cdot \normf{WW^\top - W^\star W^{\star\top}}
\end{align}
\end{small} where the last inequality is due to the requirement $\delta_{4r} \leq \tfrac{1}{100}$.
For the LHS of \eqref{eqn:secondorderopt3}, we can lower bound as follows:
\begin{align*}
\texttt{vec}(Z)^\top \cdot \nabla^2 (f+g)(W) \cdot \texttt{vec}(Z) &\geq \|Z\|_F^2 \cdot \lambda_{\min}\left( \nabla^2 (f + g)(W)\right) \\
&= \|W - W^\star R\|_F^2 \cdot \lambda_{\min}\left( \nabla^2 (f + g)(W)\right)
\end{align*} where the last equality is by setting $Z = W - W^\star R$.
Combining this expression with \eqref{eqn:secondorderopt3}, we obtain:
\begin{align*}
\lambda_{\min}\left( \nabla^2 (f + g)(W)\right) &\leq - \frac{\sfrac{1}{10}}{\|W - W^\star R\|_F^2} \cdot \normf{WW^\top - W^\star W^{\star\top}} \\
&\stackrel{\text{Lemma 5.4, \cite{tu2015low}}}{\leq} - \frac{\sfrac{1}{10}}{\|W - W^\star R\|_F^2} \cdot 2(\sqrt{2}-1) \cdot \sigma_r(X^\star) \cdot \|W - W^\star R\|_F^2 \\
&\leq -\frac{1}{7} \cdot \sigma_r(X^\star).
\end{align*}
This completes the proof.
\end{proof}

\bibliography{NonSquareFGD}
\bibliographystyle{plainnat}

\end{document}